\begin{document}
%
\title{Non-parametric Power-law Data Clustering}


\author{%
{Xuhui Fan{\small $^{1}$}, Yiling Zeng{\small $^{2}$}, Longbing Cao{\small $^{3}$} }%
\vspace{1.6mm}\\
\fontsize{10}{10}\selectfont\itshape
Advanced Analytics Institute\\
University of Technology Sydney, Australia\\
\fontsize{9}{9}\selectfont\ttfamily\upshape
$~^{1}$Xuhui.Fan@student.uts.edu.au\\
$~^{2,3}$\{Yiling.Zeng,LongBing.Cao\}@uts.edu.au%
}


%


\newtheorem{prop}{Proposition}
\newtheorem{mydef}{Definition}
\newtheorem{theorem}{Theorem}
\renewcommand{\algorithmicrequire}{\textbf{Input:}}
\renewcommand{\algorithmicensure}{\textbf{Output:}}
\maketitle

\begin{abstract}
It has always been a great challenge for clustering algorithms to  automatically determine the cluster numbers according to the distribution of datasets. Several approaches have been proposed to address this issue, including the recent promising work which incorporate Bayesian Nonparametrics into the $k$-means clustering procedure. This approach shows simplicity in implementation and solidity in theory, while it also provides a feasible way to inference in large scale datasets. However, several problems remains unsolved in this pioneering work, including the power-law data applicability, mechanism to merge centers to avoid the over-fitting problem, clustering order problem, e.t.c..
To address these issues, the Pitman-Yor Process based k-means (namely \emph{pyp-means}) is proposed in this paper. Taking advantage of the Pitman-Yor Process, \emph{pyp-means} treats clusters differently by dynamically and adaptively changing the threshold to guarantee the generation of power-law clustering results. Also, one center agglomeration procedure is integrated into the implementation to be able to
merge small but close clusters and then adaptively determine the cluster number. With more discussion on the clustering order, the convergence proof, complexity analysis and extension to spectral clustering, our approach is compared with  traditional clustering algorithm and variational inference methods. The advantages and properties of pyp-means are validated by experiments on both synthetic datasets and real world datasets.

\end{abstract}

\begin{IEEEkeywords}
Bayesian Non-parametrics; Pitman-Yor Process; power-law data structure; $k$-means clustering.
\end{IEEEkeywords}

%
\IEEEpeerreviewmaketitle

\section{Introduction}

Power-law  data is ubiquitous in real world. Examples include social networks on Facebook, topics in  web forums and citations among published papers. This kind of data differs from the traditional ones by the large fluctuations that occur in the tails of the distributions. Increased attentions have been received in recent years for detection the power-law phenomena and characterization the structure of such kind of data. Clustering is one essential technique for data structure learning owing to its capability of grouping data collections automatically. However, the key challenge of employing clustering on power-law data lies on the difficulties of inferring the cluster number, as well as determining the cluster sizes.

In the previous decades, various clustering methods have been proposed in dealing with different kinds of data. However, most of them, including classic k-means \cite{lloyd1982least}\cite{hartigan1979algorithm}\cite{bezdek1980convergence}, Mixture Models \cite{bishop2006pattern}\cite{figueiredo2002unsupervised}, Spectral Clustering\cite{shi2000normalized}\cite{von2007tutorial}, Mean Shift\cite{cheng1995mean}\cite{comaniciu2002mean} , etc., assume the cluster number to be a kind of prior information which should be provided by users, the value of which is usually unknown for the user. A few initial approaches\cite{bischof1999mdl}\cite{fraley1998many}\cite{Hamerly03learningthe}\cite{sugar2003finding} have been proposed to handle this unknown cluster number problem. However, most of them address this problem from the model selection criteria, and this leads to a dilemma in the selection of criteria.

Bayesian non-parametric learning, a fast growing research topic in recent years, can be utilized as an effective approach to address the parameter selection issue. Its core idea is to treat the required parameters, e.g., the cluster number, under a hyper-distribution and employ inference methods to learn the posterior probability of the latent variables given the observations. It demonstrates its significance contributions in parameter inference. However, it often suffers from the difficulty of designing learning schemes based on conjugate assumption, as well as computational complexity induced by inference and sampling . To address this problem, a new method called "DP-means"[14][15] has been proposed to bridge the classic k-means clustering and the non-parametric Dirichlet Process Gaussian Mixture Model (DPGMM). Taking advantage of the asymptotic zero-covariance property of Gaussian Mixture Models, \emph{dp-means} naturally introduces a fixed threshold to determine whether a data point should belong to an existing cluster or a new cluster should be created for it. It provides a unified view to combine Bayesian non-parametric methods and the hard clustering algorithms to address scale learning in large datasets.


However, several issues remain unsolved in this promising pioneering work. (i) The method is not designed for power-law data. A global threshold for all clusters may result in clusters with similar sizes. (ii) Mechanism to merge closed centers is needed. The algorithm may result in many small clusters. Some of them should be merged if they are closed enough. (iii) The clustering order influences the result. Strategies should be discussed more to address this issue.




This paper proposes a novel clustering approach, Pitman-Yor Process-means (\emph{pyp-means}), for clustering power-law data. A modified Pitman-Yor Process [16][17] is first proposed to approximate the power-law data structure in hard partition. Unlike the fixed threshold proposed in \emph{dp-means}, the modified Pitman-Yor Process introduces a deregulated threshold whose value changes in accordance to the cluster number during clustering. The larger the cluster number, the smaller the threshold would be. In this way, \emph{pyp-means} establishes a proper connection between the cluster number and threshold setting.

A center agglomeration procedure is also proposed to adaptively determine the cluster number. To address the issue that the clustering procedure may result in isolated small clusters in which are are not far from each other, we check the inter-distance between each pair of cluster centers and combine them if the distance of the two clusters are smaller than a value. This will prevent the clustering result from over fitting the power-law distribution while taking no account of the real data distribution.

The heuristic "furthest first" strategy is more discussed here to address the data order issue. We further prove that once the cluster number stops increasing, arbitrary order of the remained data points will result in the same clustering result.

The convergence of pyp-means is proved and the complexity of the algorithm is  analyzed. We further extend our method to spectral clustering to prove the effectiveness of our work.

The contribution of our work is summarised  as follows:




\begin{itemize}
\item we extend the newly proposed dp-means to the modified Pitman-Yor process based k-means algorithm to address the power-law data, which is a generalization and being able to cluster both the power-law dataset and normal dataset.
\item we integrate a center agglomeration procedure into the main implementation to overcome the overfitting problem.
\item we introduce a heuristic "furthest first" strategy to address the data order issue during clustering procedure.
\item we prove the convergence of \emph{pyp-means} and calculate the complexity of the algorithm. We also extend the method to fit spectral clustering to expand our approach to multiple clustering algorithms.
\end{itemize}

The remaining part of the paper is organized as follows. Section \ref{sec_2} introduces related work on Gaussian Mixture Models with its new derivatives and the Pitman-Yor Process. The modified Pitman-Yor Process towards power-law data is represented in Section \ref{sec_3}. Then followed by Section \ref{sec_4} detail discusses our proposed approach \emph{pyp-means}, including the main implementation, the strategy for data order issue, and the center agglomeration procedure to avoid overfitting. Further discussion on pyp-means' convergence proof, complexity analysis and its extension to spectral clustering can be found in Section \ref{sec_5}. Section \ref{sec_7} introduces experimental results on different datasets to prove the effectiveness of our work. Followed by the last section which draws a conclusion of this paper.

\section{Background} \label{sec_2}
We briefly introduce the relations between Gaussian Mixture Models with its derivatives to $k$-means clsutering\cite{bishop2006pattern},  and the pitman-yor process.

\subsection{Gaussian Mixture Models with its new derivatives to $k$-means}
Gaussian Mixture Models(GMM) treats a dataset as a set containing the samples from several Gaussian distributions. The likelihood of a data point $\boldsymbol{x}$ in the dataset can be calculated as::
\begin{equation}
p(\boldsymbol{x})=\sum_{k=1}^c \pi_k \mathcal{N}(\boldsymbol{x}|\boldsymbol{\mu}_k, \Sigma_k)
\end{equation}
Here $c$ is the components' number, $\pi_k$ denotes the proportion of component $k$, and $\mathcal{N}(\boldsymbol{x}|\boldsymbol{\mu}_k, \Sigma_k)$ is $\boldsymbol{x}$'s Gaussian likelihood in component $k$.

The local maximum of a Gaussian Mixture Model can be achieved by applying Expectation Maximization with iteration of the following equations:
\begin{equation}
\begin{split}
& \boldsymbol{\mu}_k=\frac{1}{N_k}\sum_{n=1}^N \gamma(z_{nk})\boldsymbol{x}_n\\
& N_k = \sum_{n=1}^N \gamma(z_{nk}) \\
& \Sigma_k = \frac{1}{N_k} \sum_{n=1}^N \gamma(z_{nk})(\boldsymbol{x}_n-\boldsymbol{\mu}_k)(\boldsymbol{x}_n-\boldsymbol{\mu}_k)^T
\end{split}
\end{equation}
Here $\gamma(z_{nk})=\frac{\pi_k \mathcal{N}(\boldsymbol{x}|\boldsymbol{\mu}_k, \Sigma_k)}{\sum_{k=1}^c \pi_k \mathcal{N}(\boldsymbol{x}|\boldsymbol{\mu}_k, \Sigma_k)}$ denotes the probability of assigning data point $\boldsymbol{x}_n$ to cluster $k$. As a result, it is regarded as a kind of soft clustering \cite{nock2006weighting} which is different from traditional hard clustering (e.g. k-means) in the way of assigning single point to multiple clusters with probabilities.

Actually, the ``asymptotic'' link (i.e. zero-variance limit) between the GMM and $k$-means clustering is a well-known result as in \cite{bishop2006pattern}\cite{roweis1999unifying}. More specifically, the covariance matrices of all mixture component in GMM are assumed to be $\epsilon I_{d\times d}$, and $p(\boldsymbol{x})$ becomes
\begin{equation}
p(\boldsymbol{x}|\boldsymbol{\mu}_k, \Sigma_k)=\frac{1}{(2\pi\epsilon)^{d/2}}\exp{(-\frac{1}{2\epsilon}\|\boldsymbol{x}-\boldsymbol{\mu}_k\|^2)}
\end{equation}
with $\gamma(z_{nk})$'s calculation is changed as:
\begin{equation} \label{eq_5}
\gamma(z_{nk})=\frac{\pi_k\exp\{-\|\boldsymbol{x}_n-\boldsymbol{\mu}_k\|^2/2\epsilon\}}{\sum_j\pi_j\exp\{-\|\boldsymbol{x}_n-\boldsymbol{\mu}_j\|^2/2\epsilon\}}
\end{equation}
Consider the case $\epsilon\to 0$, the smallest term of $\{\|\boldsymbol{x}_n-\boldsymbol{\mu}_j\|^2\}_{j=1}^c$ will dominate  the denominator of Eq. (\ref{eq_5}). Thus, $\gamma(z_{n,k})$ becomes:
\begin{equation}
\mathcal{\gamma}(z_{n,k}) = \left\{ \begin{array}{ll}
1 &  k = \arg\min_j\{\|\boldsymbol{x}_n-\boldsymbol{\mu}_j\|^2\}_{j=1}^c\\
0 &  $otherwise$
\end{array} \right.
\end{equation}


In this case, GMM degenerates into $k$-means which assigns each points to its nearest clustering with probability of 1.

Both GMM and k-means are suffering from the selection of cluster numbers. To address the problem, Dirichlet Process \cite{jordanrevisiting}\cite{kulis2011revisiting} is introduced into k-means recently. Taking advantage of the Dirichlet Process, a distance threshold can be generated to prevent data points from being assigned to a cluster if the distances exceed the threshold. If a data point fails to be assigned to all clusters, a new cluster will be created by taking it as the cluster center.

More specifically, a dirichlet process can be denoted as $\mathcal{DP}(\alpha, H)$, with the hyper-parameter $\alpha$ and basement distribution $H$. The hyper-parameter alfa can be further written in the form of $\alpha=\exp{(-\frac{\lambda}{2\epsilon})}$ for some $\lambda$, and the base measurement $H$ is used to generate Gaussian distribution $\mathcal{N}(0, \rho I)$. Gibbs sampling is take to address the the above process. The probability used in Gibbs sampling can be written as

\begin{equation}
\mathcal{\gamma}_{z_{nk}} = \left\{ \begin{array}{ll}
n_{-i,k}\cdot\exp{(-\frac{1}{2\epsilon}\|\boldsymbol{x}_i-\boldsymbol{\mu}_k\|^2)}/Z &  k$-th$ $ cluster$\\
\exp{(-\frac{1}{2\epsilon}\lambda-\frac{1}{2(\epsilon+\rho)}\|\boldsymbol{x}_i\|^2)}/Z &  $new cluster$
\end{array} \right.
\end{equation}
Where $k(1\le k\le c)$ denotes the existing $k$-th cluster, $n_{-i, k}$ represents the size of the $k$-th cluster excluding data point $\boldsymbol{x}_i$ and $Z$ is the normalizing constant.

While $\epsilon\to0$, the allocated label for data point $\boldsymbol{x}_n$ becomes:
\begin{equation}
l_n = \left\{ \begin{array}{ll}
k_o &  k_0 = \arg\min_j\{\lambda, \|\boldsymbol{x}_n-\boldsymbol{\mu}_j\|^2\}_{j=1}^c\\
c+1 &  \lambda=\arg\min_j\{\lambda, \|\boldsymbol{x}_n-\boldsymbol{\mu}_j\|^2\}_{j=1}^c
\end{array} \right.
\end{equation}
Once Gibbs sampling assigns a new label $c+1$ to $\boldsymbol{x}_n$, a new cluster will be generated with Gaussian distribution $\mathcal{N}(0, \rho I)$. Within finite steps, local minimum can be achieved. All data points are assigned to corresponding clusters with distances to the centers smaller than the given threshold.


Taking advantage of Dirichlet Process, this work, known as \emph{dp-means}, successfully combines the prior information (hyperparameter $\alpha=\exp{(-\frac{\lambda}{2\epsilon})}$) and local information of each component (the gaussian distribution $\mathcal{N}(\boldsymbol{\mu}, \epsilon I)$).

\emph{Dp-means} treat each component equally. One universal threshold is set for all clusters. In this way, clustering result tends to contain cluster with similar sizes. However, clusters in real world dataset usually vary a lot. They typically obey power-law distributions. Therefore, it would be convenient if the method is able to generate more reasonable clustering results which satisfy power-law distribution.

Though a wonderful work, a systematic learning on the unsolved problems, including an increasing cluster number problem, the clustering order problem, complexity analysis, e.t.c., should be focused. Other practical issues including power-law data approximating, parameter $\lambda$'s adjustment also needs to be further investigated.


\subsection{Pitman-Yor Process}

 Pitman-Yor process (\emph{py-process})\cite{pitman1997two}\cite{ishwaran2001gibbs} is a generalization of Dirichlet Process. In \emph{py-process}, a discount parameter $d$ is added to increase the probability of new class generation. Due to the discount parameter $d$'s tuning effect, it becomes a suitable model to depict power-law data. \emph{py process} degenerate to classic dirichlet process when $d$ is set to 0.



A P$\acute{o}$lya urn scheme is used here to explain the py-process' generative paradigm in the technical perspective. In this scheme, objects of interests are represented as colored balls contained in an urn. At the beginning, the urn is empty. All balls are uncolored. We pick the first ball, paint it with a certain color and put it into the urn. In the following steps, we pick one ball each time, color it and put it into the urn.  The color of the ball is allocated according to the following probability.

\begin{equation}
\pi_{i,k} = \left\{ \begin{array}{ll}
\frac{n_{-i, k}-d}{\lambda+n_{-i}} & $existing $ k$-th$(1\le k\le c) $ color$\\
\frac{\lambda+c\cdot d}{\lambda+n_{-i}} &  k = c+1 $ new color$
\end{array} \right.
\end{equation}
Where $i$ denotes the $i$-th ball picked, $k$ denotes the color assigned to $i$, $n_{-i,k}$ denote the number of balls in color $k$ exclude ball $i$, $n_{-i}$ denotes the whole number of balls without ball $i$.

 The process continues until all balls are painted and put into the urn. While the size of each cluster is fixed, the joint probability is unchanged, which refers as ``exchangeability''.

\emph{Py-process} preserves Dirichlet Process' 'rich get richer' property during the process of assigning colors to balls. The larger size of balls in a certain color, the greater probability that the new ball will be painted in this color. Thanks to the discount parameter d, the probability of generating a new color in py-process is greater than that of DP. It can be easily proven that py-process draws colors to data points in a power-law scheme. Therefore, it would be promising if py-process in incorporated into clustering algorithms to help address the power-law data.

\section{Modified Pitman-yor Process for hard clustering} \label{sec_3}

Power-law data\cite{Clauset:2009:PDE:1655787.1655789}, also named as heavy-tailed behavior data, represents the case that the frequency or the size of some data cluster obey the
exponential distribution, i.e., more small sized subsets of cluster data are coming up. In reality life, a wide range of the data obeys power-law data, including the frequencies
of words in languages, the populations of cities, the intensities of earthquakes. Under most situations, these kind of findings in power-law data would be considered as noisy or
defective. However, these are at the same time some of the most interesting part from the whole observations.

In our scenarios, one cluster's data are denoted as the same, and we define the cluster size follows according to the power-law distribution. In contrast to the ordinary data, its clustering encounters more difficulties, such as the trivial cluster discovery, the cluster number determination and the related imbalanced problem.


A general form of the density function of power-law data is stated as:
\begin{equation}
p(x)\propto L(x) x^{-\alpha}
\end{equation}
where $L(x)$ is a slowly varying function, which is any function that satisfies $\lim_{x\to\infty}L(tx)/L(x)=1$ with $t$ constant,  and $\alpha(\alpha>1)$ is one decreasing parameter.

Regarding to these difficulties in power-law data clustering, traditional clustering methods tend to group the small size clusters into major clusters or simply treat them as
noisy data points. It is un-proper while these trivial clusters may still be important to the whole data structure. Many soft clustering methods including the \emph{py-process} have been put
forward to effectively mining this kind of data. They have received good results, however, most of them still suffer the complexity problem in implementation and high conditions
required. To the best of our knowledge, little work has been done on the hard clustering scenario, nor an equivalence connected with the classic $k$-means clustering.


With the core idea in \emph{py-process} of increasing the new cluster generation's probability, we revise the concentration parameter from $\lambda$ to $\lambda\cdot(\theta)^{\ln c}$
 in \emph{dp-means}.

More specifically, during each ball's color painting in the P$\acute{o}$lya urn scheme, the color allocated according to the paradigm below:
\begin{equation} \label{eq_4}
\pi_{i, k} = \left\{ \begin{array}{ll}
n_{-i, k}\cdot\exp{(\frac{1}{2\epsilon}\theta)}/Z & k$-th$(1\le k\le c) $ cluster$ \\
\lambda\cdot\exp{(\frac{\ln c}{2\epsilon}\theta)}/Z & $the new cluster$
\end{array} \right.
\end{equation}

It is quite straightforward to see that the balls are exchangeable, which is quite basic for the power-law approximating.
\begin{prop}
The revised allocation paradigm in Eq. (\ref{eq_4}) still keeps the exchangeability property, i.e., the joint probability of a data set is not affected by their orders given each cluster size fixed.
\end{prop}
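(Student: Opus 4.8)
The plan is to write the joint probability of a labelled data set as the telescoping product of the sequential assignment probabilities, $P(l_1,\dots,l_N)=\prod_{i=1}^{N}\pi_{i,l_i}$, and then to show that this product collapses to a function of the partition statistics $\{n_k\}_{k=1}^{c}$ (the cluster sizes) and $c$ alone, with no residual dependence on the arrival order. This is precisely the route by which exchangeability of the ordinary Pitman--Yor urn is established, so the goal is to reproduce that bookkeeping for the modified weights in Eq.~(\ref{eq_4}) and to track which pieces survive a rearrangement. Concretely, I would split each factor into its numerator and the normaliser $Z$, and treat the product of numerators and the product of normalisers $\prod_i Z_i$ separately.

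For the numerators I would group the factors cluster by cluster. Each time a ball is added to an existing cluster that currently holds $j$ balls it contributes $j\cdot\exp(\theta/2\epsilon)$; as a given cluster grows from size $1$ to its final size $n_k$, these joins occur at current sizes $1,2,\dots,n_k-1$, so the cluster's existing-join factors multiply to $(n_k-1)!\,\exp\!\big((n_k-1)\theta/2\epsilon\big)$, which depends only on $n_k$. The new-cluster factors I would handle with the key observation that clusters are born one at a time, so the $m$-th cluster to appear always does so at an instant when exactly $m-1$ clusters are already present, irrespective of the order in which the balls arrive. Hence the product of the new-cluster weights is $\prod_{m=1}^{c}\lambda\exp\!\big(\theta\ln(m-1)/2\epsilon\big)$ (with the first birth, where $\ln c$ must be given a base-case value, handled separately), again a function of $c$ only. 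Multiplying the per-cluster contributions then shows the total numerator is manifestly symmetric under any reordering that preserves the partition.

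The main obstacle is the denominator. Summing the numerators gives $Z_i=(i-1)\exp(\theta/2\epsilon)+\lambda\exp\!\big(\theta\ln c_i/2\epsilon\big)$, where $c_i$ is the number of clusters present just before ball $i$ is drawn; the first term is harmless because it depends only on $i$. The difficulty is the second term: $c_i$ is genuinely order-dependent --- creating a new cluster early rather than late changes which steps carry the larger value of $c_i$ --- so $\prod_i Z_i$ is not obviously a function of the partition, and already a small two-cluster example shows the running normalisers can differ between two orders that yield the same cluster sizes. I would therefore argue the proposition at the level of the unnormalised joint, where exchangeability holds cleanly from the numerator calculation above; equivalently, since the clustering is ultimately performed in the hard-assignment limit $\epsilon\to 0$, where only the $\arg\min$ of Eq.~(\ref{eq_4}) is used and $Z$ drops out, order-invariance of the resulting partition is exactly what the numerator computation delivers. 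Pinning down which reading of ``joint probability'' is intended --- the unnormalised product versus the fully normalised law --- is the crux of turning this numerator calculation into a complete proof.
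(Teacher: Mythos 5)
Your proposal follows essentially the same route as the paper: write the joint probability as the telescoping product of the sequential assignment weights and collapse it, cluster by cluster, into a function of the cluster sizes and the cluster count alone. The paper's own proof does exactly this, arriving at $P(\boldsymbol{X})=\lambda^c\prod_{i=1}^c\bigl(c_i!\prod_{j=1}^{c_i}\exp(-\tfrac{\ln j}{2\epsilon})\bigr)/Z^n$ and declaring the expression a function of the $c_i$ only. Where you go beyond the paper is in the treatment of the normaliser: the paper writes the denominator as $Z^n$, tacitly assuming a single constant normaliser across all draws, whereas you correctly observe that the step-$i$ normaliser $Z_i=(i-1)\exp(\theta/2\epsilon)+\lambda\exp(\theta\ln c_i/2\epsilon)$ depends on the running cluster count $c_i$, which is genuinely order-dependent --- unlike the ordinary Pitman--Yor urn, whose weights are engineered so that the normaliser collapses to $\lambda+i-1$. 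Your two-cluster worry is well founded: for three balls split $2{+}1$, the order ``join then split'' yields denominator $(a+\lambda)(2a+\lambda)$ while ``split then join'' yields $(a+\lambda)(2a+\lambda\cdot 2^{\theta/2\epsilon})$ with $a=\exp(\theta/2\epsilon)$, so the fully normalised law under Eq.~(\ref{eq_4}) is not exchangeable unless $\theta=0$. The proposition therefore holds only for the unnormalised product --- or, as you note, in the $\epsilon\to 0$ hard-assignment limit where $Z$ cancels, which is the only regime the algorithm actually uses. In short, your calculation is not merely a reproduction of the paper's argument; it exposes (and offers a repair for) a real gap in it, and the crux you identify --- which reading of ``joint probability'' is intended --- is exactly the point the paper leaves unaddressed.
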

\begin{proof}
Assume the cluster number is $c$, each cluster's number is $\{c_i\}_{i=1}^c$. Thus, the joint probability of the data points is
\begin{equation}
P(\boldsymbol{X})=\lambda^c\cdot\prod_{i=1}^c (c_i!\prod_{j=1}^{c_i} \exp{(-\frac{\ln j}{2\epsilon})})/{Z^n}
\end{equation}
The equation is determined only these variables, which is exchangeability preserved.
\end{proof}

In our revision, while $\theta$ is fixed at 1, then it is the normal \emph{dp-means}.





\section{py-process means} \label{sec_4}

Benefit from the revised color allocation paradigm (Eq. (\ref{eq_4})), we extend the existed \emph{dp-means} algorithm to do a further generalization, which named as ``\emph{rpy-means}''.
The induction strategy is also quite similar as \emph{dp-means}. Employing the same
setting on both of the finite and infinite Gaussian Mixture Model ($p(\boldsymbol{x})=\sum_{k} \pi_k \mathcal{N}(\boldsymbol{x}|\boldsymbol{\mu}_k, \epsilon I)$), the
parameters are modified as $\{\lambda = \exp{(-\frac{\lambda}{2\epsilon})}, \theta = \exp{(-\frac{\theta}{2\epsilon})}\}$ in our revised pitman-yor process approximating method.

This leads to the related probability of data point $i$ assigning to an existed cluster $k$ as:
\begin{equation}
p_{i, k}=\frac{n_{-i, k}\cdot\exp{(-\frac{1}{2\epsilon}\theta-\frac{1}{2\epsilon}\|\boldsymbol{x}_i-\boldsymbol{\mu}_k\|^2)}}{Z}
\end{equation}
following the probability to the new cluster as:
\begin{equation}
p_{i, new} = \frac{\exp{(-\frac{1}{2\epsilon}(\lambda-\ln c\cdot\theta)-\frac{1}{\epsilon+\rho}\|\boldsymbol{x}_i\|^2)}}{Z}
\end{equation}

While $\epsilon\to0$, the dominating term in $p_{i, k}, p_{i, new}$ is the minimal value of $\{\lambda-\ln c\cdot\theta, \{\theta+\|\boldsymbol{x}_i-\boldsymbol{\mu}_k\|^2\}_{k=1}^c\}$, leading to the cluster allocation paradigm as:
\begin{equation} \label{eq_1}
l_n = \left\{ \begin{array}{ll}
k_o &  \|\boldsymbol{x}_n-\boldsymbol{\mu}_{k_o}\|^2 = \arg\min\{\lambda-\ln c\cdot\theta, \|\boldsymbol{x}_n-\boldsymbol{\mu}_j\|^2\}_{j=1}^c\\
new &  \lambda-\ln c\cdot\theta= \arg\min\{\lambda-\ln c\cdot\theta, \|\boldsymbol{x}_n-\boldsymbol{\mu}_j\|^2\}_{j=1}^c
\end{array} \right.
\end{equation}
Here cluster number $c$ is constrained to $c<\exp{(\lambda/\theta)}$ to avoid the minimal distance problem.
By shorten the threshold value in accordance to the cluster number, more hidden clusters could be discovered and the clusters would also be more compact than a larger threshold.





\subsection{main implementation}
One stage of our main implementations in the \emph{pyp-means} clustering is quite similar as in \emph{dp-means}. However, differences come up in the fluctuated threshold
during the clustering procedure and an stepwise/adaptive density checking procedure.

One definition on the reach of cluster centers is first made to clarify the notation.
\begin{mydef} \label{de_1}
Any data point $\boldsymbol{x}$ that lies within the ball $b(\boldsymbol{\mu}_k, \lambda)$ is said that $\boldsymbol{x}$ is $\lambda$-in of $\boldsymbol{\mu}_k$ ($\lambda$-in
data), while being outside from the ball $b(\boldsymbol{\mu}_k, \lambda)$ is said to be $\lambda$-out of $\boldsymbol{\mu}_k$($\lambda$-out data).
\end{mydef}

Under Def. (\ref{de_1}), the \emph{dp-means} gets results that all of the data points are $\lambda$-in of centers $\{\boldsymbol{\mu}_k\}_{k=1}^c$.

Figure \ref{fig_2}. depicts the process of our implementation.
\begin{figure}[htbp]
\centering
\includegraphics[scale=0.40, width = 0.40 \textwidth, bb = 128 532 362 633, clip]{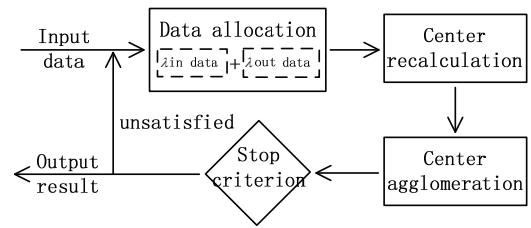}
\caption{Main implementation illustration}
\label{fig_2}
\end{figure}

The whole implementation consists of three procedures: data partition, center recalculation and center agglomeration. Data partition procedure shares
similarities with the existed \emph{dp-means}, which divides the data into $\lambda$-in data and $\lambda$-out data. For the $\lambda$-in data, its clustering method is according
to the usual way as $k$-means, while the $\lambda$-out data's clustering employs an adaptive way to determine the cluster, which would be detail discussed later. The center-recalculation
procedure is the same as the corresponding step in $k$-means. The center agglomeration procedure is one that to avoid too many trivial clusters. Details would be discussed later.

The detail implementation of our proposed \emph{pyp-means} is shown in Algorithm \ref{alg_1}.
\begin{algorithm}[htbp]
\caption{\emph{pyp-means}}
\label{alg_1}
\begin{algorithmic}
    \REQUIRE $\boldsymbol{x}_1,\cdots, \boldsymbol{x}_n$; $\lambda, \theta$, \emph{py-process}'s parameter
    \ENSURE clusters $l_1, \cdots, l_c$ and the number of clusters $c$
    \STATE Initialize $c=1$; initialize cluster center $\boldsymbol{\mu}_1$
    \REPEAT
    \FOR {each point $\{\boldsymbol{x}_i, i=1,\cdots, n\}$}
    \STATE compute $d_{ik}=\|\boldsymbol{x}_i-\boldsymbol{\mu}_k\|^2$ for $c=1,\ldots, c$.
    \IF {$\min_k d_{ik}-\theta > \lambda-\ln c\cdot\theta$}
    \STATE put $i$ into un-clustered set $D_r$
    \ELSE
    \STATE set $z_i=\arg\min_{k} d_{ik}$
    \ENDIF
    \ENDFOR
    \STATE re-clustering the remained un-clustered data set $D_r$
    \STATE employ the agglomeration procedure check
    \STATE update $c$, $\boldsymbol{\mu}_k$
    \UNTIL{converge}
\end{algorithmic}
\end{algorithm}


The objective function is identified as the cost of all inter-cluster distance ($km$-cost) adding one penalty term:
\begin{equation} \label{eq_2}
\begin{split}
\arg\min_{l_1,\ldots, l_n}&\quad\sum_{j=1}^c\sum_{i\in l_i}\|\boldsymbol{x}_i-\boldsymbol{\mu}_j\|^2+(\lambda-\ln c\cdot\theta)c\\
\textrm{where} & \quad \boldsymbol{\mu}_k=\frac{1}{|l_k|}\sum_{\boldsymbol{x}\in l_k}\boldsymbol{x}
\end{split}
\end{equation}
While $km$-cost tends to seek a larger $c$ value and the $c$-penalty term's value increases with $c$ value increases, the minimum value we are seeking is a trade-off in considering both of the cases.


\subsection{re-clustering on $D_r$}
The clustering order on the data will affect its performance in our \emph{pyp-means}, suffering from the same problem as in \cite{jordanrevisiting}. We discuss this problem here in accordance with the two stages of our main implementation: $\lambda$-in data clustering and re-clustering on $\lambda$-out data.




On clustering  $\lambda$-in data points, arbitrary order of these data points would result in the same clustering performance. This assertion applies for the classic $k$-means clustering.

On the contrary, the clustering order of $\lambda$-out data points would affect the center's determination in a sequence. The new generated different centers would be surely affect the data belonging. To explore this complex situation, a heuristic search method called ``furthest first'' is employed here. From the start of the re-clustering, we choose the data point $i_0$ whose shortest distance to all the existed centers are the largest, i.e., $i_0=\arg\max_i \{d_i|d_i=\min_k^c d_{ik}\}$ and set it as the new cluster center. Then we remove data point $i_0$ from $D_r$ and recursively do re-clustering.


One benefit of our ``furthest first'' is that we can avoid the generating of new clusters once the cluster number stopped increasing. Then arbitrary order of remained data points would not affect the clustering performance. This saves the computational cost in defining the centers.


\begin{prop}
During each iteration of our method, once no remaining $\lambda$-out data point becoming new cluster, the following data points would not be new centers either.
\end{prop}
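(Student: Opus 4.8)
The plan is to exploit the defining property of the ``furthest first'' rule together with the fact that the new-cluster threshold is frozen as soon as the cluster count stops growing. Recall from Eq.~(\ref{eq_1}) that, during re-clustering on $D_r$, a point $\boldsymbol{x}_i$ spawns a new center precisely when its nearest-center squared distance reaches the threshold, i.e.\ when $\min_k d_{ik}\ge \lambda-\ln c\cdot\theta$, and is otherwise assigned to its nearest existing cluster. I will treat one re-clustering pass as a sequence of greedy selections $i_0,i_1,\dots$, where at each step the chosen point maximises the nearest-center distance, $i_t=\arg\max_{i\in D_r}\min_k d_{ik}$, and where the existing centers $\{\boldsymbol{\mu}_k\}$ are held fixed throughout the pass (only chosen points are appended as new centers; the recalculation of Algorithm~\ref{alg_1}, the ``update $c,\boldsymbol{\mu}_k$'' step, is deferred to the end).

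First I would fix the first step at which the selected furthest point, call it $i_0$, fails to become a new center. By the allocation rule this means $\min_k d_{i_0 k}<\lambda-\ln c\cdot\theta$, so $i_0$ joins an existing cluster and the cluster count $c$ is left unchanged. The key observation is that $i_0$ is, by construction, the remaining point of largest nearest-center distance; hence for every other point $i$ still in $D_r$ we have $\min_k d_{ik}\le \min_k d_{i_0 k}<\lambda-\ln c\cdot\theta$. Because no new center was created, the center set, the count $c$, and therefore the threshold $\lambda-\ln c\cdot\theta$ are all identical when the next point is examined.

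I would then close the argument by induction over the remaining points. At the next step the furthest remaining point again has nearest-center distance bounded by that of $i_0$, hence still below the unchanged threshold, so it too is absorbed into an existing cluster and leaves $c$ fixed. Iterating, every subsequent point satisfies $\min_k d_{ik}<\lambda-\ln c\cdot\theta$ and is assigned to an existing cluster, so no new center is ever produced in the remainder of the pass, which is exactly the claim.

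The hard part will be justifying the invariance used in the inductive step: that processing $i_0$ does not increase the nearest-center distance $\min_k d_{ik}$ of any still-unassigned point. Under the convention that the existing centers are frozen during a single pass this is immediate, since merely assigning $i_0$ (rather than promoting it) changes neither the center locations nor the count $c$, and hence leaves every distance $d_{ik}$ and the threshold $\lambda-\ln c\cdot\theta$ untouched. I would therefore state this freezing convention explicitly as part of the re-clustering description, since it is the one place the argument genuinely relies on it: were centers recomputed mid-pass, the proof would additionally require a monotonicity estimate ruling out that a re-centering pushes some remaining point's nearest-center distance back up across the threshold.
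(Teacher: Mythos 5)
Your argument is correct and is essentially the paper's own proof: under the ``furthest first'' ordering, the first point that fails to spawn a new center has the maximal nearest-center distance among the remaining points, the count $c$ and hence the threshold $\lambda-\ln c\cdot\theta$ stay fixed, and so every later point's distance is also below the threshold. The only difference is that you make explicit the freezing of centers within a pass, which the paper leaves implicit.
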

\begin{proof}
Assume that the shortest distances of the remaining $D_r$ are $d_{(1)}<d_{(2)}<\cdots<d_{(r)}$, corresponding to the variables $\boldsymbol{x}_{(1)}, \cdots, \boldsymbol{x}_{(r)}$. While the $\boldsymbol{x}_{(k)}(1\le k\le r)$ does not come to be one new center, $d_{(k)}<\lambda-\ln c\cdot\theta$, then the threshold becomes fixed. Similar as the Apriori rules, then all the remaining data points $\{\boldsymbol{x}_{(l)}\}_{l>k}^r$ with $\{d_{(l)}<\lambda-\ln c\cdot\theta\}_{l>k}^r$ will belong to the existed clusters.

Thus, our selection order will not generate the new cluster centers.
\end{proof}
We formalize our re-clustering procedure on the remaining dataset $D_r$ as Algorithm \ref{alg_2}.
\begin{algorithm}[htbp]
\caption{re-clustering on $D_r$}
\label{alg_2}
\begin{algorithmic}
    \REQUIRE remain dataset $D_r$; $\lambda, \theta$, \emph{py-process}'s parameter; generated centers $\{\boldsymbol{\mu}_k\}_{k=1}^{c_0}$
    \ENSURE clusters $l_1, \cdots, l_c$ and the number of clusters $c$
    \FOR {each data point in $D_r$}
    \STATE compute $d_{ik}=\|\boldsymbol{x}_i-\boldsymbol{\mu}_k\|^2$ for $k=1,\ldots, c$,
    \STATE select the shortest one for each data point in $D_r$, denoted as $\{d_k\}_{k=1}^r$
    \STATE order $\{d_k\}_{k=1}^r$ from largest to smallest
    \IF {$d_1-\theta > \lambda-\ln c\cdot\theta$}
    \STATE c = c+1; set $\boldsymbol{x}_{(1)}$ as one new center $\boldsymbol{\mu}_{c+1}$
    \ELSE
    \STATE put $\boldsymbol{x}_{(1)}$ into the existed nearest cluster
    \ENDIF
    \ENDFOR
\end{algorithmic}
\end{algorithm}

\subsection{center agglomeration procedure} \label{sec_9}
In \emph{dp-means}, new cluster generated while new $\lambda$-out data encountered, however, it never disappeared even if it gets much closer to another cluster. This could result in an overfitting problem on dividing one dense clusters into two parts. On the other hand, this special overfitting results in an unproper smaller threshold on determining the valid cluster number. Thus, we need to adaptively determine the cluster number.



From the following proposition, we can evaluate the $km$-cost value's change while two clusters combine into one. Then the condition of clustering combining could be well established.

Assuming that $\lambda, \theta$ are the pre-defined parameter, c is the current cluster number, $\{\boldsymbol{\mu}_i\}_{i=1}^2$ and $\{n_i\}_{i=1}^2$ are the corresponding cluster center and cluster size, we have the following proposition:
\begin{prop}
If two clusters satisfy $\|\boldsymbol{\mu}_1-\boldsymbol{\mu}_2\|^2 < \frac{n_1+n_2}{n_1n_2} (\lambda-\theta\cdot\ln\frac{(c+1)^{(c+1)}}{c^c})$, then combining these two clusters could reduce the value of the objective function (Eq. (\ref{eq_2})).
\end{prop}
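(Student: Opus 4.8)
The plan is to write the total change in the objective function (Eq.~(\ref{eq_2})) induced by merging as a sum of two independent contributions---the change in the within-cluster distortion ($km$-cost) and the change in the $c$-penalty term---and then to verify that the stated inequality is exactly the condition under which this total change is negative. Throughout I would read $c$ as the cluster count \emph{after} the merge, so that merging carries the configuration from $c+1$ clusters down to $c$; this convention is what makes the factor $(c+1)^{(c+1)}/c^c$ appear in the bound rather than $c^c/(c-1)^{(c-1)}$.

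First I would isolate the $km$-cost contribution. Only the points of the two merged clusters change their assigned center, so the distortion of every other cluster is untouched. Before merging these points contribute $\sum_{i\in l_1}\|\boldsymbol{x}_i-\boldsymbol{\mu}_1\|^2+\sum_{i\in l_2}\|\boldsymbol{x}_i-\boldsymbol{\mu}_2\|^2$; afterwards they are all measured against the common centroid $\boldsymbol{\mu}=\frac{n_1\boldsymbol{\mu}_1+n_2\boldsymbol{\mu}_2}{n_1+n_2}$. Invoking the parallel-axis (Huygens--Steiner) identity $\sum_{i\in l_j}\|\boldsymbol{x}_i-\boldsymbol{\mu}\|^2=\sum_{i\in l_j}\|\boldsymbol{x}_i-\boldsymbol{\mu}_j\|^2+n_j\|\boldsymbol{\mu}_j-\boldsymbol{\mu}\|^2$ for $j=1,2$, the per-point distortion terms cancel and the increase in $km$-cost collapses to $n_1\|\boldsymbol{\mu}_1-\boldsymbol{\mu}\|^2+n_2\|\boldsymbol{\mu}_2-\boldsymbol{\mu}\|^2$. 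Substituting $\boldsymbol{\mu}_1-\boldsymbol{\mu}=\frac{n_2}{n_1+n_2}(\boldsymbol{\mu}_1-\boldsymbol{\mu}_2)$ and $\boldsymbol{\mu}_2-\boldsymbol{\mu}=\frac{n_1}{n_1+n_2}(\boldsymbol{\mu}_2-\boldsymbol{\mu}_1)$ reduces this to the clean expression $\frac{n_1 n_2}{n_1+n_2}\|\boldsymbol{\mu}_1-\boldsymbol{\mu}_2\|^2$.

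Next I would compute the penalty change. The penalty term $(\lambda-\theta\ln c)c=\lambda c-\theta\, c\ln c$ changes by $(\lambda c-\theta c\ln c)-(\lambda(c+1)-\theta(c+1)\ln(c+1))=-\lambda+\theta\ln\frac{(c+1)^{(c+1)}}{c^c}$. Adding the two contributions, the net change in the objective is $\frac{n_1 n_2}{n_1+n_2}\|\boldsymbol{\mu}_1-\boldsymbol{\mu}_2\|^2-\lambda+\theta\ln\frac{(c+1)^{(c+1)}}{c^c}$; demanding that this be negative and multiplying through by the positive factor $\frac{n_1+n_2}{n_1 n_2}$ yields precisely the claimed bound on $\|\boldsymbol{\mu}_1-\boldsymbol{\mu}_2\|^2$.

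The calculations themselves are short, so the genuine care-points are conceptual rather than computational. The first is the bookkeeping of the indexing convention just described. The second, which I expect to be the main obstacle, is justifying that no distortion term outside the two clusters enters the difference and that the recomputed merged center is exactly the size-weighted mean $\boldsymbol{\mu}$; only then does the parallel-axis identity apply with no residual slack, guaranteeing that the merged-distortion increase depends on the data solely through $n_1,n_2$ and the inter-center distance $\|\boldsymbol{\mu}_1-\boldsymbol{\mu}_2\|^2$. Once this is settled, the equivalence between the inequality and a strict decrease of the objective is immediate.
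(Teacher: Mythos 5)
Your proposal is correct and follows essentially the same route as the paper: decompose the change in the objective into the increase in $km$-cost from merging, shown to equal $\frac{n_1 n_2}{n_1+n_2}\|\boldsymbol{\mu}_1-\boldsymbol{\mu}_2\|^2$, plus the decrease in the $c$-penalty term, $\lambda-\theta\ln\frac{(c+1)^{(c+1)}}{c^c}$, and rearrange. The only cosmetic difference is that you derive the distortion increase via the parallel-axis identity while the paper expands the squared norms directly; the two computations are equivalent.
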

\begin{proof}
Assume $\{\boldsymbol{x}_i^{(1)}\}_{i=1}^{n_1}$ and $\{\boldsymbol{x}_i^{(2)}\}_{i=1}^{n_2}$ are the two closed cluster, with $\{\boldsymbol{y}_i\}_{i=1}^{n_1+n_2}$ denoting their combined clusters. Then the two cluster center satisfy the condition with the new combined cluster center $\boldsymbol{\mu}$:
\begin{equation}
n_1\boldsymbol{\mu}_1+n_2\boldsymbol{\mu}_2 = (n_1+n_2)\boldsymbol{\mu}
\end{equation}

We first show that the combination of two cluster would result an increase in the $km$-cost value,
\begin{equation}
\begin{split}
&\sum_{k=1}^{n_1+n_2} \|\boldsymbol{y}_k-\boldsymbol{\mu}\|^2-(\sum_{i=1}^{n_1} \|\boldsymbol{x}_i^{(1)}-\boldsymbol{\mu}_1\|^2+\sum_{i=1}^{n_2} \|\boldsymbol{x}_2^{(1)}-\boldsymbol{\mu}_2\|^2) \\
= &\sum_{k=1}^{n_1+n_2} \|\boldsymbol{y}_k\|^2-(n_1+n_2)\|\boldsymbol{\mu}\|^2\\
& -(\sum_{i=1}^{n_1} \|\boldsymbol{x}_i^{(1)}\|^2+\sum_{j=1}^{n_2} \|\boldsymbol{x}_j^{(2)}\|^2-n_1\|\boldsymbol{\mu}_1\|^2-n_2\|\boldsymbol{\mu}_2\|^2)\\
= &n_1\|\boldsymbol{\mu}_1\|^2+n_2\|\boldsymbol{\mu}_2\|^2-\frac{\|n_1\boldsymbol{\mu}_1+n_2\boldsymbol{\mu}_2\|^2}{n_1+n_2}\\
= & \frac{n_1n_2\|\boldsymbol{\mu}_1-\boldsymbol{\mu}_2\|^2}{n_1+n_2}\ge0
\end{split}
\end{equation}

Due to the cluster number reducement, the cluster number $c$-penalty term jumps from $(\lambda-\ln(c+1)\cdot\theta)(c+1)$ to $(\lambda-\ln c\cdot\theta)c$. Thus, if the condition satisfied
\begin{equation}
\begin{split}
& \frac{n_1n_2\|\boldsymbol{\mu}_1-\boldsymbol{\mu}_2\|^2}{n_1+n_2}\le (\lambda-\ln(c+1)\cdot\theta)(c+1)-(\lambda-\ln c\cdot\theta)c\\
&\Leftrightarrow   \|\boldsymbol{\mu}_1-\boldsymbol{\mu}_2\|^2 < \frac{n_1+n_2}{n_1n_2} (\lambda-\theta\cdot\ln\frac{(c+1)^{(c+1)}}{c^c})
\end{split}
\end{equation}
Our objective function is decreasing with the two clusters' combination.
\end{proof}

The following simple prototype illustrates our idea more clearly. $A, B, C, D$ are four data points to be clusterd. Assume the threshold is $r$ and we have previously clustered $A, B$ and $C, D$ being individual clusters. According to the agglomeration procedure, we combine these two clusters since the distance of the two centers are $0.8r$, satisfies the condition $0.8r \le\frac{2+2}{2\cdot2}r=r$. If we do not employ this procedure, then these two clusters would remain the same, leading to a unsatisfied result.
\begin{figure}[htbp]
\centering
\includegraphics[scale=0.40, width = 0.4 \textwidth, bb = 125 463 285 508, clip]{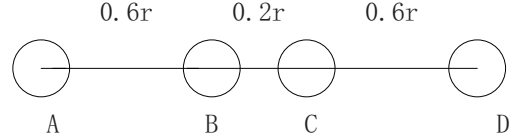}
\caption{Agglomeration procedure illustration}
\label{fig_5}
\end{figure}

In our detail implementation, each time after the cluster centers re-calculated, we run this agglomeration procedure. By checking if any pair of cluster centers satisfies the condition, we can effectively prevent the above situations.


\section{Further Discussion} \label{sec_5}
The work is extended here for further discussion, including the convergence analysis, the complex analysis, and possible extension to spectral clustering.

\subsection{convergence analysis}
Guaranteeing a local minimum value within finite steps is vital in our \emph{pyp-means}. We approach this goal by first showing that the objective function (Eq. (\ref{eq_2}))
strictly decreases during each iteration.

\begin{prop} \label{prop_2}
The objective function (Eq. (\ref{eq_2})) is to be strictly decreasing during each iteration we have applied in Algorithm \ref{alg_1}, until a local optimal point reached.
\end{prop}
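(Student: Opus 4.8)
The plan is to follow the classical monotonicity argument for Lloyd's $k$-means, adapted to the extra penalty term $(\lambda-\ln c\cdot\theta)c$ appearing in Eq.~(\ref{eq_2}). I would decompose one iteration of Algorithm~\ref{alg_1} into its four constituent sub-steps --- reassignment of the $\lambda$-in points to their nearest existing centers, re-clustering of $D_r$ (which may create new centers), center recalculation, and the agglomeration check --- and show that each sub-step either leaves the objective unchanged or strictly decreases it, with at least one strict decrease whenever the partition is actually modified. Combined with the fact that the objective depends only on the partition (the centers being pinned to the cluster means), so that it takes values in a finite set, a strictly decreasing value sequence must terminate in finitely many steps, precisely at a configuration that no sub-step alters, i.e.\ a local optimum.

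Two of the sub-steps are immediate and reuse the standard $k$-means bookkeeping. In the reassignment step the cluster count $c$ is fixed, so the penalty is constant, and moving each $\lambda$-in point to its nearest existing center cannot raise $\sum_{j}\sum_{i\in l_j}\|\boldsymbol{x}_i-\boldsymbol{\mu}_j\|^2$; the $km$-cost is non-increasing. In the center-recalculation step $c$ is again fixed, and for a fixed assignment the centroid $\boldsymbol{\mu}_k=\frac{1}{|l_k|}\sum_{\boldsymbol{x}\in l_k}\boldsymbol{x}$ is the unique minimizer of the within-cluster sum of squares, so this step too cannot increase the objective. Neither introduces any subtlety beyond the classical case.

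The delicate sub-step, and the one I expect to be the main obstacle, is the creation of a new center during re-clustering, because this is where $c$ changes and the penalty becomes active and, unlike \emph{dp-means}, nonlinear in $c$. When a point whose nearest-center squared distance is $d$ is promoted to a new center, the $km$-cost falls by $d$ while the penalty rises from $(\lambda-\ln c\cdot\theta)c$ to $(\lambda-\ln(c+1)\cdot\theta)(c+1)$, an increase of $\lambda-\theta\ln\frac{(c+1)^{(c+1)}}{c^c}$ (the same bookkeeping used in the agglomeration proposition). Hence the net change in the objective is negative exactly when $d>\lambda-\theta\ln\frac{(c+1)^{(c+1)}}{c^c}$. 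Algorithm~\ref{alg_2} triggers a creation only when the nearest-center distance exceeds the threshold $\lambda-\ln c\cdot\theta$ (in fact by a further margin of $\theta$), so it suffices to check that this threshold implies the break-even inequality, i.e.\ that $\lambda-\ln c\cdot\theta\ge\lambda-\theta\ln\frac{(c+1)^{(c+1)}}{c^c}$. Dividing by $\theta>0$ this reduces to $c\cdot c^{c}\le(c+1)^{c+1}$, that is $c^{c+1}\le(c+1)^{c+1}$, which is trivially true; therefore every new-center creation strictly decreases the objective. Verifying this compatibility between the heuristic threshold and the exact break-even point is the crux of the argument.

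Finally, the agglomeration sub-step is already covered by the preceding proposition: its merge condition $\|\boldsymbol{\mu}_1-\boldsymbol{\mu}_2\|^2<\frac{n_1+n_2}{n_1n_2}(\lambda-\theta\cdot\ln\frac{(c+1)^{(c+1)}}{c^c})$ is exactly the condition under which a merge lowers Eq.~(\ref{eq_2}), so this step strictly decreases the objective whenever it fires and is a no-op otherwise. Assembling the four pieces, any iteration that changes the partition strictly decreases the objective, and since the $km$-part is non-negative and the penalty part is positive under the stated constraint $c<\exp(\lambda/\theta)$, the objective is bounded below; a strictly decreasing sequence confined to the finite set of achievable partition values cannot continue indefinitely, so Algorithm~\ref{alg_1} reaches a local optimum in finitely many iterations. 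I would also remark that no cycling --- for instance a create-then-merge within the same iteration --- can occur, since every sub-step that acts is itself a strict decrease and the objective is single-valued on partitions.
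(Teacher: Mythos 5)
Your proposal follows the same overall decomposition as the paper's proof --- reassignment of $\lambda$-in points, handling of $\lambda$-out points, center recalculation, and the agglomeration check, each shown to be non-increasing --- but it is genuinely more careful at the one step that actually matters. The paper simply asserts that promoting a $\lambda$-out point to a new center ``shrinks the cost'' and reasons loosely about the per-cluster threshold $\lambda-\ln c\cdot\theta$ decreasing in $c$, without ever comparing the $km$-cost saving against the increase in the \emph{total} penalty $(\lambda-\ln c\cdot\theta)c\mapsto(\lambda-\ln(c+1)\cdot\theta)(c+1)$. You compute that increment exactly as $\lambda-\theta\ln\frac{(c+1)^{c+1}}{c^c}$ and verify that the algorithm's creation threshold dominates it, reducing the compatibility condition to $c^{c+1}\le(c+1)^{c+1}$; this is precisely the gap in the paper's argument, and closing it is what makes the claim actually true for the nonlinear penalty (in \emph{dp-means} the penalty is linear in $c$ and the issue is invisible). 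Your treatment of the other three sub-steps matches the paper's (citing the standard $k$-means monotonicity for reassignment and centroid update, and the agglomeration proposition for merging). One small caveat you share with the paper: ``strictly decreasing'' is too strong for the reassignment and centroid steps, where ties can leave the objective unchanged even when the partition changes; the finite-partition termination argument you append (which really belongs to Theorem~\ref{theorem_1}) needs a tie-breaking convention to be airtight, but this does not affect the substance of the proposition.
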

\begin{proof}
The iteration is divided into three stages: partitioning data points; updating cluster centers; agglomeration procedure.

In Partitioning data points stage, the distance between $\lambda$-in data and its newly belonging cluster center would not increased, this is confirmed by \cite{selim1984k}.
The $\lambda$-out data are set as new cluster centers, this shrinkage the cost from to the $c$ penalty value $\lambda-c\cdot\theta$. What is more, as $c$ increases, the $c$
penalty value $\lambda-c\cdot\theta$ decreases, which reduces the objective function more.

In the updating cluster centers stage, the mean representation is always the optimal selection with the least cost value.



In agglomeration procedure, the objective function is strictly decreasing as provided in Section \ref{sec_9}.

Thus, the objective function is strictly decreasing.
\end{proof}

Employing the similar idea of \cite{selim1984k}, the convergence property of our \emph{pyp-means} could be easily obtained.
\begin{theorem} \label{theorem_1}
pyp-means converges to a partial optimal solution of the objective function (Eq. \ref{eq_2}) in a finite number of iterations.
\end{theorem}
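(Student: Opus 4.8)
The plan is to establish convergence by combining the strict-decrease property of the objective function (Proposition~\ref{prop_2}) with a finiteness argument on the space of admissible configurations, following the classical strategy of Selim and Ismail~\cite{selim1984k}. The core observation is that a clustering algorithm whose objective function strictly decreases at every non-stationary iteration, and which can only visit finitely many distinct configurations, must terminate at a partial optimal solution in finitely many steps.

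First I would make precise what a ``configuration'' is in the context of Algorithm~\ref{alg_1}. A natural choice is the assignment of the $n$ data points to clusters, i.e., the partition $\{l_1,\ldots,l_c\}$ together with the cluster count $c$. Once a partition is fixed, the centers $\boldsymbol{\mu}_k=\frac{1}{|l_k|}\sum_{\boldsymbol{x}\in l_k}\boldsymbol{x}$ are determined by the mean rule in Eq.~(\ref{eq_2}), so the objective value is a function of the partition alone. Since there are only finitely many partitions of $n$ points into at most $n$ groups (and $c$ is bounded, both by $n$ and by the constraint $c<\exp(\lambda/\theta)$ stated after Eq.~(\ref{eq_1})), the set of attainable objective values is finite.

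Next I would invoke Proposition~\ref{prop_2}: each of the three stages per iteration---data partitioning, center recalculation, and the agglomeration procedure---weakly decreases the objective, and strictly decreases it unless the current configuration is already a fixed point of all three stages. Combining strict decrease with the finiteness of the value set yields the conclusion: the sequence of objective values is strictly decreasing and takes values in a finite set, so it cannot continue indefinitely. Hence the algorithm reaches, in a finite number of iterations, a configuration at which no stage produces a strict decrease; I would identify this fixed point with a partial optimal solution in the sense of~\cite{selim1984k}, meaning the assignment is optimal given the centers and the centers are optimal given the assignment.

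The main obstacle I anticipate is ensuring that no configuration is revisited, which is precisely what guarantees termination rather than mere monotone convergence of the values. Because the agglomeration step changes $c$ and the threshold $\lambda-\ln c\cdot\theta$ is \emph{not} constant across iterations, one must verify that the strict-decrease argument survives these threshold changes---that the objective in Eq.~(\ref{eq_2}) is a single global quantity whose decrease is genuine and not an artifact of a shifting penalty. I would address this by emphasizing that Eq.~(\ref{eq_2}) is evaluated with the \emph{current} value of $c$ at each comparison, so the penalty term is part of the quantity being driven down, and by relying on the per-stage inequalities already established (the $km$-cost bound for assignment, optimality of the mean, and the agglomeration inequality of Section~\ref{sec_9}). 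With strict monotonicity over a finite value set, no value can recur, so no configuration can recur, and finite termination follows.
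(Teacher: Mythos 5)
Your proposal is correct and follows essentially the same route as the paper: both arguments combine the strict decrease of the objective from Proposition~\ref{prop_2} with the finiteness of the set of partitions of $n$ points (hence of attainable objective values) to rule out any repeated value, the paper phrasing this as a contradiction ($J_{r_1}=J_{r_2}$ for $r_1\ne r_2$ is impossible) while you state it directly as termination of a strictly decreasing sequence over a finite set. Your added attention to the shifting penalty term $(\lambda-\ln c\cdot\theta)c$ is a sensible refinement of a point the paper leaves implicit, but it does not change the structure of the argument.
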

\begin{proof}
As finite number of data points, we get finite partitions of data points in the maximum.


Assume our declaim is not true, which means that there exist $r_1\ne r_2$, such that $J_{r_1}=J_{r_2}$. Without loss of generality, we set $r_1 > r_2$.

According to Proposition \ref{prop_2}, our objective function $J_r$ strictly decrease while $r$ increases. Thus, for any $n_1 > n_2$, we get $J_{n_1}<J_{n_2}$. The inequality
also applies to $r_1, r_2$. This leads to the fact that $J_{r_1}<J_{r_2}$, which is contradict to our assumptions. Thus, the assumption does not success and we get our conclusion.
\end{proof}

Under Theorem \ref{theorem_1}, we can ensure the our procedure could reach a local minimum within finite steps.

\subsection{complexity analysis}

Our proposed \emph{pyp-means} is scalable to the number of data points $n$ and the final cluster number $c$. The computational complexity can be analyzed as follows. All the three major computational steps during one iteration are considered as follows.
\begin{itemize}
  \item {\bf{Partitioning the data points}}. After initialization of the centers, this steps mainly consists of two different procedures.
  \begin{itemize}
    \item For $\lambda$-in data with data size $n_1$, this process is the same as $k$-means clustering in simply comparing the distances of data in all $c$ cluster centers. Thus, the complexity for this step is $\mathcal{O}(n_1c)$
    \item For $\lambda$-out data with data size $n_2$, the re-clustering process involves a sort operation, which the quickest complexity is $n_2\log n_2$. Under the worst case of each $\lambda$-out data being new centers, the complexity cost would be $\mathcal{O}(n_2\cdot n_2\log n_2)$. With an label assigning process, the complexity would be $\mathcal{O}(n_2^2\log n_2 + cn_2)$.
  \end{itemize}
  \item {\bf{Updating cluster centers}}. Given the partition matrix, updating the cluster centers is to find the means of the data points in the same cluster. Thus, for $c$ clusters, the computational cost complexity for this step is $\mathcal{O}(nc)$.
  \item{\bf{Agglomeration procedure}}. This procedure needs to check all the possible pairs of the clusters, thus a complexity of $\mathcal{O}(c^2)$ is needed.
\end{itemize}

Assume the clustering process needs $h$ iterations to converge, the total computational complexity of this algorithm is $\mathcal{O}(hn_2^2\log n_2+hnc+hc^2)$. While $n_2$ is usually set to be s small subset of the algorithm,
the algorithm is computational feasible. However, while we set the threshold $\lambda, \theta$ small values, leading to a larger $n_2$, then the computational cost would be heavy.

\subsection{pitman-yor spectral clustering}
Our work can also be transplanted into the spectral clustering framework. We have first shown that our objective function (Eq. (\ref{eq_2})) in \emph{pyp-means} is equivalent to
the trace optimization problem:
\begin{equation} \label{eq_3}
\max_{\{Y|Y^TY=I\}} \textrm{tr}(Y^T(K-(\lambda-\ln c\cdot\theta )I)Y)
\end{equation}
Where $K$ is the $n\times n$ kernel matrix.

Detail proof is quite similar as the one in \cite{jordanrevisiting}, we do not provide the detail here due to the duplicate.




The classical determination of the orthonormal matrices $Y$ in spectral theory states that while $Y$ selects to be the top $c$ eigenvectors, the objective function in (Eq. \ref{eq_3}) reaches its maximum for a fixed $c$ clusters. For flexible $c$ value in our problem, the objective function (Eq. \ref{eq_3}) reaches its maximum while $Y$ selected to be the matrices of eigenvectors with the non-negative eigenvalues, corresponding to the $c$-adjusted matrix $K-(\lambda-\ln c\cdot\theta )I$.

Particularly, we determine $c$, the integer number of clusters, through an adaptive measure on the $c$ value's connection to the threshold changing of the eigenvalues of the similarity matrix, i.e.:
\begin{equation}
c = \arg\max_{c\in\{1, \cdots, n\}} \{c|\lambda_c>\lambda-\ln c\cdot\theta,\lambda_{c+1}<\lambda-\ln(c+1)\cdot\theta\}
\end{equation}
Where $\{\lambda, \theta\}$ are the pre-defined parameter, $\{\lambda_i\}_{i=1}^n$ are the decreasing eigenvalues of the kernel Matrix $K$, and $\lambda_k$ denotes the $k$-th larger eigenvalue, $k=1,\cdots,n$.

After getting the relaxed cluster indicator matrices $Y$, we can cluster the rows of $Y$ as data points using $k$-means clustering, according to the standard spectral clustering method and take the corresponding result as the final clustering result.

\section{Experiments} \label{sec_7}
The experimental evaluation is conducted on three types of datasets, which are grouped into synthetic dataset, UCI benchmarking dataset \cite{Frank+Asuncion:2010} and US communities' criminal dataset \cite{Redmond2002660}. All datasets are preprocessed by normalizing each feature on each dimension into the interval [0, 1]. Furthermore, the clustering process of the algorithms is repeated for 50 times at each setting and the average value is taken as the final result. All experiments were run on a computer with Intel Xeno (R) CPU 2.53-GHz, Microsoft Windows 7 with algorithms coded in Matlab.

\subsection{Experimental Setting}
For sufficient comparison, our proposed \emph{pyp-means} are compared with three baseline algorithms: $k$-means clustering, \emph{dp-means} and Dirichlet Process variational learning (V.L.).

Parameters in these algorithms are set accordingly. In $k$-means clustering, the pre-defined cluster number is set as the true number in Synthetic data and we use the random initialization strategy as the starting partition; \emph{dp-means} and \emph{pyp-means} are using the same parameter setting, which will be described later; in V. L., we use the variational inference\cite{blei2006variational} procedure to do the learning and the related parameters are using cross validation technique to determine.



\subsection{Performance Metrics}
Validating clustering results is always a non-trivial task. Under the presence of true labels in synthetic data, we employ the accuracy to measure the effectiveness of our proposed methods, which is defined as follows:
\begin{equation}
ACC = \frac{\sum_{i=1}^n \delta(y_i, \textrm{map}(c_i))}{n}\times100
\end{equation}
Where $n$ is the data size, $y_i$ and $c_i$ denote the true label and the obtained label; $\delta(\cdot)$ is the dirac function as $
\delta(y, c)=\left\{ \begin{array}{ll}
1 & y=c;\\
0 & y\ne c.
\end{array} \right. $; map($\cdot$) is a
permutation function that maps each cluster label to a
category label, and the optimal matching can be found by
the Hungarian algorithm\cite{papadimitriou1998combinatorial}.

Besides ACC, the NMI (normalized mutual information) is also used in the synthetic data learning, i.e.,
 \begin{equation}
 NMI=\frac{\sum_{i=1}^c\sum_{j=1}^c\log(\frac{N\cdot n_{ij}}{n_in_j})}{\sqrt{\sum_{i=1}^cn_i\log{(\frac{n_i}{N})}\sum_{i=1}^cn_j\log{(\frac{n_j}{N})}}}
 \end{equation}
where $n_{ij}$ is the number of agreements between clusters $i$ and $j$, $n_i$ is the number of data points in cluster $i$, $n_j$ is the number of data points in cluster $j$, and $N$ is the total number of data points in the dataset.


\subsection{Synthetic Dataset}
To be more focused, the synthetic dataset is manually set to contain power-law behavior in our learning procedure. Here we would like to investigate multiple aspects of our method, including the clustering accuracy and NMI score performance, the relationships between threshold and discovered cluster number, running time, e.t.c..

\subsubsection{synthetic data generation}
The synthetic data is derived from the same generation algorithm as that in \cite{zhang2004fuzz}. The power-law property is reflected by specially assigning more data points to the first few clusters (to the size of about 200) while remaining others as about 30. Also, the cluster number varies from 3 to 150 to cover larger cases. Each cluster is distributed according to the 3-dimensional Gaussian Distribution $\mathcal{N}(\boldsymbol{\mu}, I)$, where $\boldsymbol{\mu}$ is one uniform distributed random centers.


\subsubsection{Practical parameter setting}
We employ the method in \cite{jordanrevisiting} to set $\lambda$'s value. We first roughly estimate the cluster number $c$ and initialize the center with the cluster mean.
From $k=1$ to $c$, we iteratively select the data point that has the largest distance (the distance is defined as the smallest distance to all the existed centers) as the new generated
center. The maximum value of distance while $k=c$ is identified as the value of $\lambda$ in our experiment. For $\theta$'s value, we experimentally set it as $\theta = \lambda/6$. Detail discussions of the $\theta$ determination will be discussed later.


\subsubsection{simulation results}
\begin{figure*}[htbp]
\centering
\includegraphics[scale=1.0, width = 1.0 \textwidth]{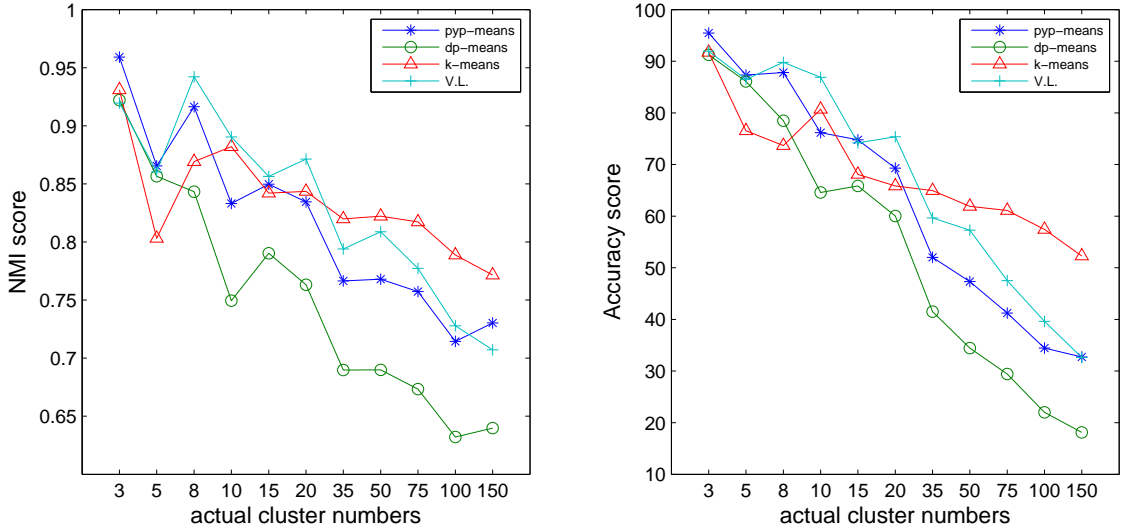}
\caption{Synthetic dataset running result}
\label{fig_3}
\end{figure*}


Figure \ref{fig_3}. shows the running result on the synthetic dataset. The experiments are running on cases with cluster number from 3 to 150 and the corresponding NMI score and accuracy score are recorded. From Figure \ref{fig_3}., it is easy to see that both \emph{dp-means} and \emph{pyp-means} get satisfied results while the cluster number is small $(c<10)$. However, when more clusters are
generated, \emph{dp-means} falls below 0.8 in NMI and 70 in accuracy while \emph{pyp-means} receives a much better performance both in NMI and accuracy. We should also note that when $c\le0$, our \emph{pyp-means} receives a better performance than $k$-means clustering in most cases, even if the later has the true cluster number.


\subsubsection{parameter learning}
\begin{figure*}[!tp]
\centering
\includegraphics[scale=1, width = 1 \textwidth]{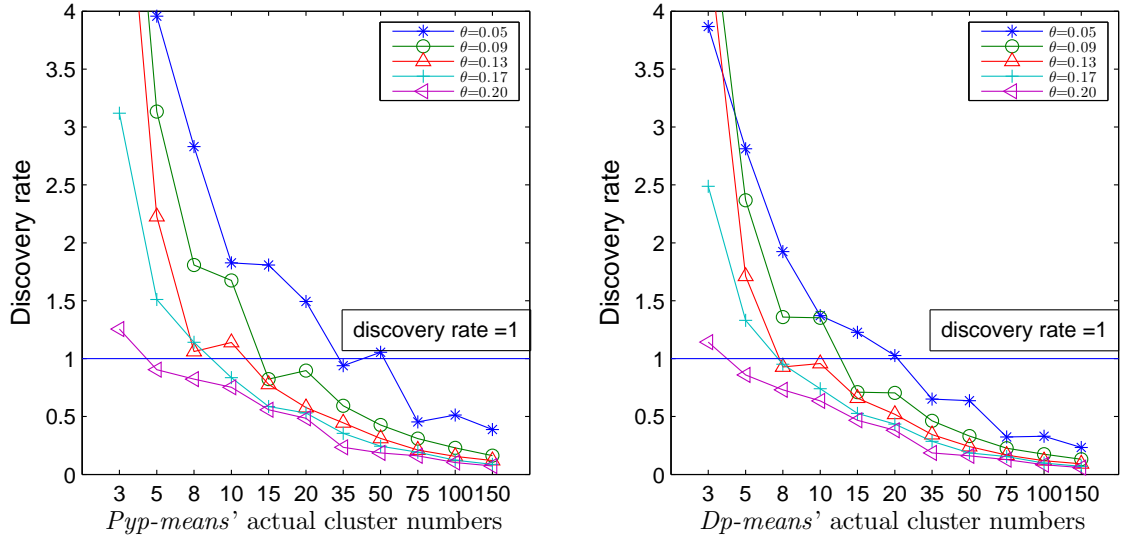}
\caption{Parameter learning result}
\label{fig_6}
\end{figure*}

In this part, the parameter $\lambda$'s value is taken from $0.05$ to $0.2$. The ``discovery rate'' ($\textrm{discovery rate} = \frac{\textrm{number of resulted clusters}}{\textrm{number of actual clusters}}$) is employed to denote the cluster number we have uncovered.
 By default, we set $\theta=\lambda/10$ in \emph{pyp-means}. The detail result shows in Figure \ref{fig_6}. From this figure, we can find that smaller threshold would in a larger discovered cluster number. This is quite reasonable as the smaller threshold would lead to smaller cluster size and then the larger cluster number. Also, our proposed \emph{pyp-means} can discovery a relative accurate cluster number while it is less than 75; however, the \emph{dp-means} can only discover perform well under the 10 cluster number case.

\subsubsection{cluster number learning}

We shows the corresponding cluster number discovered by using the parameter setting in previous in this experiment. Since $k$-means always take the true cluster number as
a prior information. We check the other three methods' discovery rate in comparison. We can see that due to the cluster number's increase, all of the discovery rate slowly
decrease. However, we can see that our \emph{pyp-means} receives a better performance than the existed $\emph{dp-means}$ while facing large cluster number situation.
\begin{figure}[!tp]
\centering
\includegraphics[scale=0.4, width = 0.4 \textwidth]{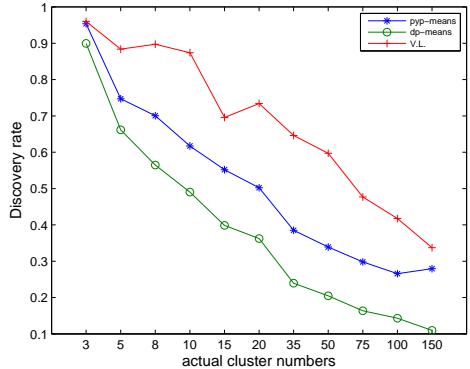}
\caption{Discovery rate}
\label{fig_4}
\end{figure}

\subsubsection{running time test}
The running time of our methods is tested to validate our complexity analysis, with methods comparable test and self-parameter comparable test. From Figure \ref{fig_7}., we can see that our \emph{pyp-means} runs approximate the same time as \emph{dp-means}. Even if in large scale case, the running time is still tolerated; while in variational learning, the running time increases in exponential.
\begin{figure}[!tp]
\centering
\includegraphics[scale=0.40, width = 0.4 \textwidth]{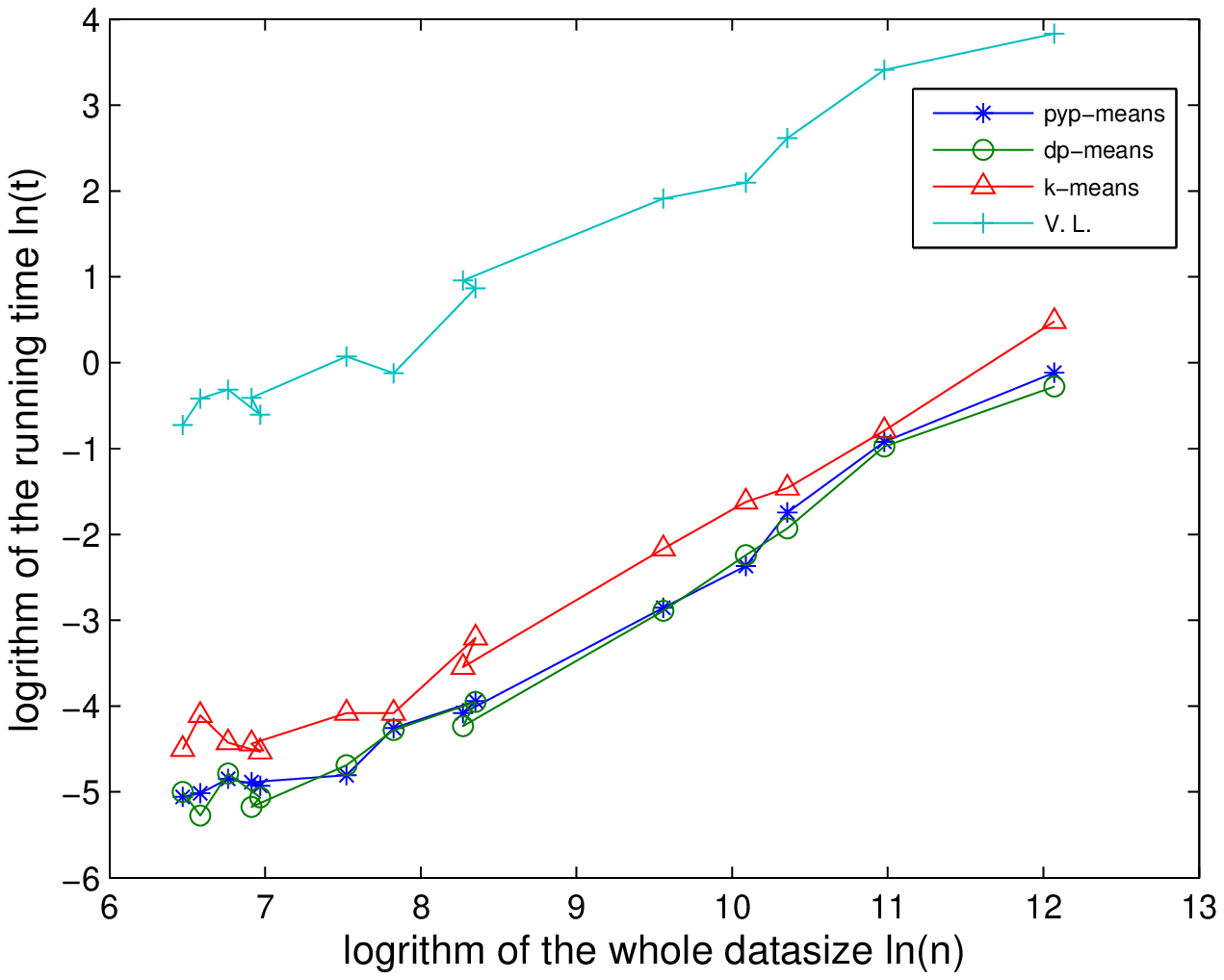}
\caption{Histogram of communities distribution}
\label{fig_7}
\end{figure}

\subsection{Real World Dataset}
Two kinds of real world dataset are used here to further validate our \emph{pyp-means}'s performance, including the UCI benchmarking dataset and the US communities Crime dataset.

\subsubsection{UCI benchmarking dataset}
In the UCI benchmarking dataset study, two types of the datasets are selected for our study:  the power-law
dataset and ``normal'' dataset with a few clusters or equal sized clusters, in contrast to the power-law dataset. We do this to show that our methods not only could receive better performance on the power law dataset but also could obtain a satisfied result on the ``normal'' dataset.


Since the power-law dataset is limited, we manually make up some by removing some data points in certain clusters in some datasets. One maximum likelihood estimation of the power-law density function parameter $\alpha$ is used to curve its detail power-law behavior, denoted as:
\begin{equation}
\hat{\alpha} = 1 + c \left[\sum_{i=1}^c \ln \frac{x_i}{x_{\min}}\right]^{-1}
\end{equation}
Here $x_i$ denotes the cluster size in our study, while $c$ represents the cluster number.

The smaller $\hat{\alpha}$, the larger power-law behavior tendency.

With the following table, the detail of UCI benchmarking datasets we are using is shown:
\begin{table}[htbp]
\caption{UCI benchmarking dataset} \label{table_2}
\centering
\begin{tabular}{c|c|cccc}
  \hline
  type &data & size & dimension & clusters & $\hat{\alpha}$  \\
    \hline
\multirow{3}{*}{normal} & wine & 178&13 & 3& 6.02\\
 & satellite & 6435 & 36 & 7 & 3.15\\
 & statlog & 2310 & 19 & 7& $\infty$\\
  \hline
\multirow{6}{*}{P. L.} & yeast &1484 &8 &10 & 1.38\\
 & vowel &349 & 10&11 & 1.33\\
 & shape &160 & 17& 9& 1.49\\
 & pendigits &7494 &16 & 10&  1.63\\
 & page-block &5473 &11 &5 & 1.49\\
 & glass & 214& 9&6 & 1.94\\
     \hline
\end{tabular}
\end{table}

We tune the parameter $\lambda$'s value experimentally to receive a better performance, and the $\theta$'s value is default as $\theta=\lambda/10$. Table \ref{table_3} is the detail outcomes of the UCI benchmarking data experiments.
\begin{table}[!tp]
\caption{UCI benchmarking dataset results} \label{table_3}
\centering
\begin{tabular}{cc|cccccccc}
  \hline
  dataset & criterion  & $pyp$-means& $dp$-means & $k$-means& V. L.  \\
  \hline
  \multirow{2}{*}{wine}&NMI  & 0.8126 & 0.7815  & 0.8349& 0.4288 \\
     &ACC& 82.04 & 80.12 & 94.94  & 62.92\\
       & C.N. & 2.90 & 2.82 & 3 & 3\\
     \hline
  \multirow{2}{*}{satellite} &NMI & 0.5953 & 0.5683  & 0.6125 & 0.3122 \\
    &ACC & 66.74 & 66.58 & 67.19  & 34.93 \\
           & C.N. & 5.96 & 5.26 & 6 & 4\\
     \hline
  \multirow{2}{*}{statlog} &NMI & 0.6537 & 0.6570 & 0.6128 & 0.4823\\
    &ACC  & 55.69 & 55.97 & 59.91 & 31.17 \\
           & C.N. & 6.82 & 6.62 & 7 & 6\\
     \hline
  \hline
  \multirow{3}{*}{yeast}&NMI  & 0.2476 & 0.1768  & 0.2711 & 0.1063 \\
     &ACC & 41.3329 & 36.2278 & 36.8706  & 33.6927\\
     & C.N. & 9 & 6.04 & 10 & 8\\
     \hline
  \multirow{2}{*}{vowel}&NMI  & 0.4479 & 0.4125  & 0.4357 & 0.3938\\
    &ACC & 28.9914 & 27.5645 & 29.5645 & 31.2321\\
     & C.N. & 11.90 & 9.04 & 11 & 11\\
          \hline
  \multirow{2}{*}{shape}&NMI & 0.7405 & 0.7204  & 0.6593& 0.4279\\
    &ACC & 64.00 & 61.36 & 63.05 & 32.50\\
         & C.N. & 15.90 & 14.90 & 9 & 9\\
     \hline
  \multirow{2}{*}{pendigits} &NMI & 0.6903 & 0.6622  & 0.6834&  0.7024 \\
     &ACC& 66.86 & 64.65 & 69.96 & 62.57 \\
     & C.N. & 10.76 & 8.88 & 10 & 10\\
     \hline
  \multirow{2}{*}{page-block} &NMI & 0.1817 & 0.1807  & 0.1484 & 0.2622 \\
    &ACC & 67.70 & 66.90 & 44.23  & 73.91\\
    & C.N. & 6.02 & 5.32 & 5 & 5\\
     \hline
  \multirow{2}{*}{glass} &NMI & 0.3875 & 0.3784 & 0.3077 & 0.2865\\
    &ACC  & 49.50 & 48.37 & 42.88 & 45.37 \\
    & C.N. & 5.94 & 5.00 & 6 & 6\\
     \hline
\end{tabular}
\end{table}
Here C.N. denotes the Cluster Number the method has produced.

From the results given, we can see that on the normal datasets, our method \emph{pyp-means} performances better or at least as good as the \emph{dp-means} and $k$-means clustering on most cases. This usually because the cluster number $c$ is usually small under this kind of dataset, leading the discount parameter $\theta$ function little in the process.

On the power-law dataset in UCI, our \emph{pyp-means} can receive better result than \emph{dp-means}. The ability to automatically learn the threshold plays a vital role in this learning. Although our methods loses at some datasets, it could still be validated valued.

\subsubsection{communities crime dataset}
We also conduct experiments on the communities crime rate dataset.

The communities crime dataset is one collection combines socio-economic data from the 1990 US Census, law enforcement data from the 1990 US LEMAS survey, and crime data from the 1995 FBI UCR.

The dataset constitute of nearly 100 attributes. The attributes varies in many aspects of the community, excluding the clearly unrelated attributes. The class label is the total number of violent crimes per $100,000$ population. In this experiment, as the crime rate is one continuous variable ranges in [0, 1], we manually discrete the values into a certain number of intervals and gets the related labels.

Figure \ref{fig_1}. depicts the histogram of the each interval's number under the case of 16 intervals.
\begin{figure}[!tp]
\centering
\includegraphics[scale=0.40, width = 0.4 \textwidth]{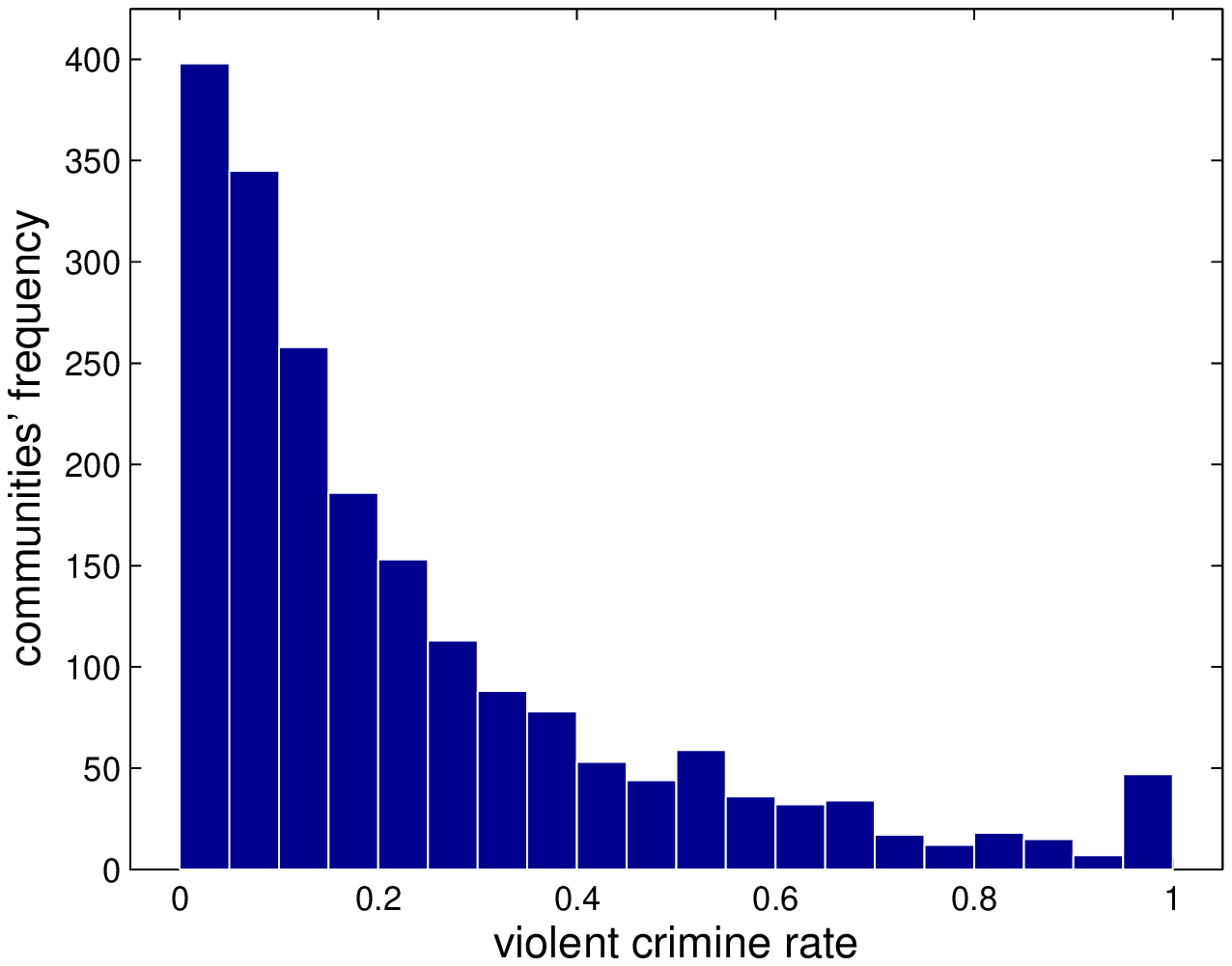}
\caption{Histogram of communities distribution}
\label{fig_1}
\end{figure}
The figure above clearly shows the data distributed according to power-law behavior.

To avoid the ``\emph{curse of dimensionality}'' problem, we apply the feature selection technique \cite{Brown:2012:CLM:2188385.2188387} and select first 10 features as the most correlated ones in advance.

The parameter are also tuned so as to better describe the true cluster labels. The tuned parameter and related results are shown in Table \ref{table_4}..
\begin{table}[htbp]
\caption{US Communities' criminal dataset results} \label{table_4}
\centering
\begin{tabular}{cc|ccccc}
  \hline
$\theta$ & criterion & \emph{pyp-means} & \emph{dp-means} & $k$-means  & V. L.  \\
  \hline
\multirow{3}{*}{6.57} & NMI & 0.2291 & 0.2075 & 0.2284  & 0.0862\\
    & ACC & 11.10 & 12.14 & 9.92 & 10.34 \\
        & C. N. & 53.8 & 34.9 & 51.0 & 13.0\\
     \hline
   \multirow{3}{*}{11.01} &  NMI & 0.1794 & 0.1680 & 0.1721  & 0.0639\\
    & ACC & 16.52 & 22.23 & 15.27 & 18.66 \\
        & C. N. & 23.5 & 10.6 & 21.0 & 11.0\\
     \hline
     \multirow{3}{*}{12.51} &  NMI & 0.1800 & 0.1715 & 0.1746  & 0.0896\\
   & ACC& 24.30 & 27.17 & 22.60 & 34.32 \\
        & C. N. & 11.7 & 7.7 & 11.0 & 9.0\\
     \hline
 \multirow{3}{*}{17.11} & NMI & 0.1737 & 0.1536 & 0.1706  & 0.1119\\
    & ACC & 40.06 & 48.13 & 35.33 & 59.46\\
        & C. N. & 5.7 & 3.6 & 6.0 & 6.0\\
     \hline
\end{tabular}
\end{table}

From the result, we can see that our \emph{pyp-means} can receive a better performance in both the NMI score and cluster number prediction.

\section{Conclusion} \label{sec_8}
One novel modified Pitman-Yor Process based method is proposed here to address the power-law data clustering problem. With the discount parameter in \emph{py-process} slightly adjusted, the power-law data is to be perfectly depicted. We also introduce one center agglomeration procedure, leading to an adaptively way in determining the number of clusters. Further, we extend our work to the spectral clustering case to address more sophisticated situations.

Some other issues are also well discussed here, including the convergence and complexity analysis, the practical issues including one reliable data clustering order. All these have greatly strengthen the solidness and reality applicability of the method.

\bibliographystyle{IEEEtran}
\bibliography{IEEEtran}

\end{document}